\newcommand{\true}{\ensuremath{\top}\xspace}
\newcommand{\false}{\ensuremath{\bot}\xspace}
\newcommand{\floor}[1]{\left\lfloor #1 \right\rfloor}
\newcommand{\ceil}[1]{\left\lceil #1\right\rceil}
\newcommand{\CryptoMiniSAT}{\ensuremath{\mathsf{CryptoMiniSAT}}}
\newcommand{\SAT}{\ensuremath{\mathsf{SAT}}}
\newcommand{\WeightMC}{\ensuremath{\mathsf{WeightMC}}}
\newcommand\blfootnote[1]{%
	\begingroup
	\renewcommand\thefootnote{}\footnote{#1}%
	\addtocounter{footnote}{-1}%
	\endgroup
}
\newcommand{\ApproxMCThree}{\ensuremath{\mathsf{ApproxMC3}}}
\newcommand{\WISH}{\ensuremath{\mathsf{WISH}}}
\newcommand{\adder}{{\sf adder}}
\newcommand{\bdd}{{\sf bdd}}
\newcommand{\cardnet}{{\sf cardnet}}
\newcommand\F[1][]{%
	\pgfkeys{/cnfxor, default, #1}%
	F( 
	\ifx\clauseCNFOverride\empty\relax
	\clauseN, \clauseK 
	\else
	\clauseCNFOverride
	\fi
	)
}
\newcommand\Q[1][]{%
	\pgfkeys{/cnfxor, default, #1}%
	Q^{\frac{1}{2}}(\clauseN, 
	\ifx\clauseXOROverride\empty\relax
	\clauseS \clauseN
	\else
	\clauseXOROverride
	\fi
	)
}
\newcommand\FQ[1][]{%
	\pgfkeys{/cnfxor, default, #1}%
	\psi^{\frac{1}{2}}(\clauseN, 
	\clauseK ,
	\ifx\clauseXOROverride\empty\relax
	\clauseS \clauseN
	\else
	\clauseXOROverride
	\fi
	)
}
\renewcommand{\P}[1]{\ensuremath{\mathsf{Pr}\left[#1\right]}}
\newtheorem{theorem}{Theorem}
\newtheorem*{theorem*}{Theorem}
\newtheorem{lemma}[theorem]{Lemma}
\newcommand{\bigO}{\mathcal{O}}
\title{Phase Transition Behavior of Cardinality and XOR Constraints}
\author{
	Yash Pote$^1$\and
	Saurabh Joshi$^2$\And
	Kuldeep S. Meel$^1$\\
	\affiliations
	$^1$National University of Singapore\\
	$^2$IIT Hyderabad, India\\
}
\begin{document}
\maketitle

\begin{abstract}
The runtime performance of modern SAT solvers is deeply connected to the phase transition behavior of CNF formulas. While CNF solving has witnessed significant runtime improvement over the past two decades, the same does not hold for several other classes such as the conjunction of cardinality and XOR constraints, denoted as CARD-XOR formulas. The problem of determining satisfiability of CARD-XOR formulas is a fundamental problem with wide variety of applications ranging from discrete integration in the field of artificial intelligence to maximum likelihood decoding in coding theory. The runtime behavior of random CARD-XOR formulas is unexplored in prior work. In this paper, we present the first rigorous empirical study  to characterize the runtime behavior of 1-CARD-XOR formulas. We show empirical evidence of a surprising phase-transition that follows a non-linear tradeoff between CARD and XOR constraints.  
\end{abstract}

\section{Introduction}

The study of runtime behavior of algorithmic techniques in the context of constraint satisfaction problems (CSP) has been key to several breakthroughs in the design of new solvers~\cite{DM94}. Specifically, a deep
connection was discovered between the density (ratio of the number of clauses to the number of variables) of random
propositional CNF fixed-width (fixed number of literals per clause) formulas and the runtime  behavior of SAT solvers on such formulas. 
For random $k$-CNF formulas, where every clause contains exactly $k$ literals, experiments
suggest a specific phase-transition density, for example 4.26 for random 3-SAT,
but establishing this analytically has been highly challenging~\cite{CP13a}, and it has been
established only for $k=2$~\cite{CR92} and all large enough $k$~\cite{DSS15}. 
A phase-transition phenomenon has also been identified in random XOR formulas (conjunctions of 
XOR constraints). 
Creignou and Daud{\'e}~\shortcite{CD99} proved a phase-transition at density 1 for variable-width 
random XOR formulas.

 Recently, Dudek, Meel, and Vardi~\shortcite{DMV16,DMV17} extended such studies to the conjunction of CNF and XOR constraints, called CNF-XOR formulas. The motivation for their study was the usage of CNF-XOR formulas in the recent hashing-based techniques for the problem of propositional model counting~\cite{Stockmeyer83,CMV13b,CMV16,SM19}.


  Satisfiability of  conjunction of a cardinality constraint and XOR constraints gives rise to an interesting problem, which we shall refer to as 1-CARD-XOR. Given a set of propositional variables, a cardinality constraint (CARD) puts  bounds on how many of these variables can be set to {\tt true}.
It is worth noting that 1-CARD-XOR is still NP-complete~\cite{BMT78}, even to approximate~\cite{ABSS93}, and as our experimental evaluation demonstrates, the study of 1-CARD-XOR alone is computationally expensive. Furthermore, 1-CARD-XOR formulas are necessary and sufficient for maximum likelihood decoding (MLD), one of the most crucial problems in coding theory, in which one seeks to extract the maximum amount of information from a noisy channel. The problem of maximum likelihood decoding is equivalent to determining satisfiability of a 1-CARD-XOR formula. Consequently, MLD has been subject to theoretical and practical investigations for over 50 years ~\cite{Chase85,TV15}.


Generalization of a cardinality constraint is a Pseudo-Boolean (PB) constraint which enforces bounds on the summation of the weights of the propositional variables that can be set to {\tt true}.
A variant and a more generalized version of 1-CARD-XOR problem is the satisfiability of conjunction of CNF constraints, one Pseudo-Boolean constraint and random XOR constraints, denoted as CNF-PB-XOR 
formulas. Formulas of this kind play a crucial role in solving one of the fundamental problems in artificial intelligence: discrete integration.
  Given a set of constraints as a Boolean formula $F$ and a weight function $W$, the problem of discrete integration is to compute the total weight of the set of solutions of input constraints. This has applications in 
numerous areas, including probabilistic reasoning, machine learning, 
planning, statistical physics, inexact computing, and 
constrained-random verification
\cite{JS96,MP99,Bacchus2003,Sang04combiningcomponent,DH07,GSS08,Mur12,EGSS14a}.

Recently, two hashing-based approaches have been proposed for discrete integration: {\WISH}~\cite{EGSS13c} and {\WeightMC}~\cite{CFMSV14}.  Both of these approaches provide strong Probably Approximately Correct (PAC)-style guarantees, i.e., $(\varepsilon,\delta)$ guarantees. The core idea of {\WeightMC} is to partition the problem of discrete integration into linearly many {\em regions} such that the weight of satisfying assignments in each of the {\em regions} is {\em almost equal}; thereby allowing the usage of hashing-based unweighed counting techniques for each of the regions. Each of the regions can be represented by the conjunction of $F$ and one Pseudo-Boolean (PB) constraint. Consequently, the underlying SAT solver invoked during unweighted counting subroutine needs to handle the CNF-PB-XOR formulas.
 While the elegant formulation of {\WISH} and {\WeightMC} promises scalability and strong theoretical guarantees, {\WISH} and {\WeightMC} have not witnessed scalability similar to that of unweighted counting algorithms such as {\ApproxMCThree}. Unlike {\CryptoMiniSAT}, which is optimized for CNF-XOR formulas, to the best of our knowledge, there do not exist specialized solvers that can handle CNF-PB-XOR formulas efficiently. Design of solvers to efficiently  handle CARD-XOR formulas alone would push the boundaries of state-of-the-art techniques to handle several problems of interest~\cite{Par18}.

The phase-transition behavior of CNF constraints has been analyzed to explain runtime behavior of {\SAT} solvers~\cite{AchCoj08}. Furthermore, the study of Dudek et al~\shortcite{DMV16,DMV17} contributed to the development of a new architecture for handling CNF-XOR constraints~\cite{SM19}. We believe that analysis of the phase-transition phenomenon for CARD-XOR formulas would be pivotal  towards demystifying the runtime behavior of the current state of the art solvers.  
Therefore, a deeper understanding of the runtime behavior of CSP/SAT solvers for CARD-XOR constraints can have far-reaching consequences.

 The primary contribution of this work is the first rigorous empirical study to characterize the runtime behavior of 1-CARD-XOR formulas. In particular:
 \begin{enumerate}
 	\item We prove (in Section~\ref{sec:analysis}) upper and lower bounds on the location of the 1-CARD-XOR  phase-transition region.
 	
 	\item 	We present (in Section~\ref{sec:experiments}) experimental evidence for phase transition behavior of 1-CARD-XOR formulas, henceforth known as 1-CARD-XOR phase-transition that follows a non-linear trade-off between $k$-CNF clauses and XOR clauses.
 	
	\item  We demonstrate that the runtime behavior of SAT solver around phase transition is reminiscent of random CNF formulas but is surprisingly different from CNF-XOR formulas. This observation underscores the need for further exploration in this direction for deeper understanding.

 \end{enumerate}
 
 The surprising nature of our observations opens up future directions of research and we hope that a deeper understanding would lead to the design of efficient CARD-XOR solvers in the future. The rest of the paper is organized as follows. We discuss notations and preliminaries in Section~\ref{sec:prelims}. We survey related work in Section~\ref{sec:related}. We present a theoretical analysis to obtain lower and upper bounds on the location of phase transition in Section~\ref{sec:analysis}. We then present the empirical behavior of 1-CARD-XOR constraints in Section~\ref{sec:experiments}.  We finally conclude in Section~\ref{sec:conclusion}.

\section{Notations and Preliminaries} \label{sec:prelims}

Let $X = \{x_1,\cdots,x_n\} $ be a set of propositional variables and let $F$ be a formula defined over $X$. A \textit{satisfying assignment} or a \textit{witness} of $F$ is an assignment of truth values to the variables in $X$ such that $F$ evaluates to true. Let $\#F$ denote the number of satisfying assignments of F. We say that $F$ is satisfiable (or SAT) if $\#F>0$ and unsatisfiable (or UNSAT) if $\#F=0$.
 
A single XOR constraint (also called a {\em XOR clause}) over $X$ is specified as $ a_{1}x_{1} \oplus a_{2}x_{2} \oplus \cdots \oplus a_{n}x_{n} = b_{0}$, where all $a_{i},b_{j} \in \{0,1\}$. 
Satisfiability of a system of $m$ XOR constraints (XORSAT) over $n$ variables can be thought of as a matrix $A \in \{0,1\}^{m \times n} $, a vector $b \in \{0,1\}^m$, and a variable vector $x \in \{0,1\}^n$ which satisfy $Ax=b$. The density $s$ of a system of $m$ XOR constraints is the ratio $s=m/n$.

To generate a random XORSAT instance, we create such a matrix $A$ and the vector $b$ with each element either $0$ or $1$ with probability $\frac{1}{2}$.    
Let the random variable $\Q$ denote such a randomly generate XORSAT instance over $n$ with $\ceil{sn}$ XOR clauses, where $s=\frac{m}{n}$ is called the XOR density. On expectation
a XOR clause in such an instance would have $\frac{n}{2}$ variables.

An at-most-$k$ cardinality constraint is satisfiable by an assignment if and only if at most $k$ of the $n$ literals are set to \true by that assignment. The set of satisfying assignments for this constraint forms a Hamming ball of radius $k$, which has volume $\sum_{w=0}^{k} {{n} \choose {w}}$.
Let $\F$  represent the CNF encoding of the at-most-k cardinality constraint over $n$ variables. We will use $\#\F$ to represent the number of solutions to $\F$, $\#\F = \sum_{w=0}^{k} {{n} \choose {w}}$.  

A 1-CARD-XOR formula is the conjunction of some number of XOR clauses and a cardinality constraint. For fixed positive integers $k$ and $n$, and a fixed positive real number $s$, let the random variable $\FQ$ denote the formula $\Q \wedge \F$.

We use $\P{E}$ to denote the probability of an event $E$. 
We say that an infinite sequence of random events $E_1, E_2, \cdots $ occurs 
\emph{with high probability} (denoted, w.h.p.) if $\lim\limits_{n \to \infty} \P{E_n} = 1$.
Let $H(\mu)= - \mu\log_{2}(\mu)-(1-\mu)\log_{2}(1-\mu)$ denote the binary entropy function.

\section{Related Work} \label{sec:related}
\subsubsection{Phase Transitions}
Motivated from statistical physics, the study of satisfiability of random constraint satisfaction problems led to the observation of a phase transition behavior~\cite{KS94}. In particular, the probability of satisfiability was observed to undergo a sharp transition from one to zero at the {\em critical} density, defined as the ratio of number of clauses to number of variables. For example, for random 3-SAT, the critical point was observed at density 4.26. Theoretical investigations into the location of random k-SAT have led to the precise identification of density for $k=2$ and existence of a sharp transition for large $k$~\cite{DSS15}. 

Furthermore, a phase-transition phenomenon has also been identified in
random $l$-XOR formulas (conjunctions of XOR constraints of length $l$), for $l \geq 1$ , without specifying an exact location for the phase-transition \cite{CDH03}. Pittel and Sorkin~\shortcite{PS16} identified the location of the phase transition for $l$-XOR formulas for $l > 3$. Dudek \textit{et al.}~\shortcite{DMV16} first studied the satisfiability threshold for the conjunction of random $k$-CNF and random variable-width XOR formulas. As is the case with random $k$-CNFs, experiments confirm that the hardest instances are at the critical threshold for $k$-CNF-XOR formulas~\cite{DMV17}. To the best of our knowledge, no prior work exists regarding the study of phase transition for a formula with one cardinality constraint in conjunction with a set of random variable-width XOR constraints (1-CARD-XOR formulas). 

\subsubsection{Cardinality Encodings}
Cardinality (CARD) constraints  naturally arise in many different contexts, such as computer tomography~\cite{GGP99}, MaxSAT algorithms~\cite{FM06}, radio frequency assignment~\cite{YD13}, product configuration~\cite{YD13}, program repair~\cite{joshikroening-fm15} and weighted counting problems~\cite{Par18}. Due to their ubiquity in several application domains, several encodings have been developed which translate them into the Boolean CNF form such as the Totalizer encoding~\cite{totalizer}, the Sequential counter~\cite{Sinz05}, Adder~\cite{ES06}, BDD based encoding~\cite{bdd},  Cardinality Networks~\cite{CardinalityNA09}, and the like. These encodings exhibit different characteristics in terms of their size (e.g., number of clauses and number of variables) and whether they preserve arc-consistency, i.e., the solver is able to detect inconsistencies by unit propagation alone~\cite{ZY00}. Therefore, we focus on observing the repeatability of behavior across different encodings before drawing a conclusion in our study.

\section{Establishing a Phase-Transition}
\label{sec:analysis}
\renewcommand{\theenumi}{(\alph{enumi})}
In this section we will first define and show the existence of a phase transition phenomenon in 1-CARD-XOR formulas. A phase transition boundary is defined by a function $\phi$ such that, w.h.p., a random formula with $s<\phi$
is satisfiable, and becomes unsatisfiable as soon as $s>\phi$.
Note that the random variable denoting a 1-CARD-XOR formula is $\FQ = \F \wedge \Q$.

\begin{theorem}
	\label{thm:existence}
	 (\cite{DMV16}, Theorem 1)

	If $\phi{(k/n)} = \frac{1}{n} \log_2{(\#\F)}$, then for all $k\geq1$ and $s\geq0$:
	\begin{enumerate}
	 \item If $s < \phi{(k/n)} $, then w.h.p. $\FQ$ is satisfiable.
	 \item If $s > \phi{(k/n)} $, then w.h.p. $\FQ$ is unsatisfiable.
	\end{enumerate}
\end{theorem}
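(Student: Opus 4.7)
The plan is to apply the first- and second-moment method to the random variable $\#\FQ$, the number of assignments that simultaneously satisfy the deterministic cardinality encoding $\F$ and the random XORSAT instance $\Q$. Because this statement is quoted as a specialization of Dudek, Meel, and Vardi's general CNF-XOR phase-transition result, the role played here by the cardinality constraint is just to supply the explicit count $\#\F = \sum_{w=0}^{k} \binom{n}{w}$; the remainder of the argument depends only on $\#\F$ and the randomness of $\Q$.

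First I would compute $\mathbb{E}[\#\FQ]$. Each random XOR clause has the form $a^{\mathsf{T}} x = b$ with $a \in \{0,1\}^n$ and $b \in \{0,1\}$ uniform, so for any fixed $\sigma$, $\Pr[a^{\mathsf{T}} \sigma = b] = 1/2$. By independence across the $m = \lceil sn \rceil$ clauses, a fixed assignment satisfies $\Q$ with probability $2^{-m}$, and linearity of expectation over the solutions of $\F$ gives
\[
\mathbb{E}[\#\FQ] \;=\; \#\F \cdot 2^{-m} \;=\; 2^{n(\phi(k/n) - s)}.
\]
Part (b) then follows from Markov's inequality: whenever $s$ exceeds $\phi(k/n)$, $\Pr[\#\FQ \geq 1] \leq \mathbb{E}[\#\FQ]$, which decays exponentially to zero as $n \to \infty$.

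For part (a) I would turn to the Paley--Zygmund inequality $\Pr[\#\FQ > 0] \geq (\mathbb{E}[\#\FQ])^2 / \mathbb{E}[(\#\FQ)^2]$. The crucial ingredient is pairwise independence of random affine XOR constraints: for $\sigma_1 \neq \sigma_2$, the joint image $(a^{\mathsf{T}} \sigma_1 + b,\, a^{\mathsf{T}} \sigma_2 + b)$ is uniform on $\{0,1\}^2$, so both assignments satisfy a single clause jointly with probability $1/4$, and hence satisfy all $m$ clauses jointly with probability $4^{-m}$. Summing over ordered pairs of solutions of $\F$,
\[
\mathbb{E}[(\#\FQ)^2] \;=\; \#\F \cdot 2^{-m} + \#\F(\#\F - 1) \cdot 4^{-m}.
\]
Plugging back in and simplifying gives the ratio $\bigl(1 + 2^{m}/\#\F - 1/\#\F\bigr)^{-1}$, and since $s < \phi(k/n)$ forces $2^{m}/\#\F$ to decay exponentially in $n$, this ratio tends to $1$, so $\FQ$ is satisfiable w.h.p.

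The main obstacle is essentially the pairwise-independence verification, which is exactly what makes a random affine XOR clause act like a $2$-universal hash function over $\mathbb{F}_2^n$; everything else is routine bookkeeping. The only subtlety specific to CARD-XOR compared to Dudek et al.'s CNF-XOR setup is ensuring that the random XORs are drawn over the original $n$ Boolean variables, not over auxiliary variables introduced by a particular CNF encoding of the cardinality constraint; once this is confirmed, $\#\F$ really equals $\sum_{w=0}^{k} \binom{n}{w}$ and the calculation above goes through without modification.
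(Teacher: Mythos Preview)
Your argument is correct and is, in substance, the proof underlying the cited result. The paper does not supply its own proof of Theorem~\ref{thm:existence}; it simply records the statement as a special case of Theorem~1 of \cite{DMV16}, and later (Lemmas~\ref{lemma:lowerbound} and~\ref{lemma:upperbound}) re-derives both directions by quoting Theorem~\ref{thm:satisfiability:} as a black box and setting $\alpha = \delta n + 1$ so that the failure probability $2^{-\alpha}$ vanishes as $n\to\infty$. Your Markov bound is exactly the content of Theorem~\ref{thm:satisfiability:}(b), and your pairwise-independence Paley--Zygmund computation is exactly the content of Theorem~\ref{thm:satisfiability:}(a); you have unpacked the black box rather than citing it, which makes your write-up self-contained but otherwise identical in content to the route the paper takes. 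The one cosmetic slip is that $m=\lceil sn\rceil$, so $\E[\#\FQ]=\#\F\cdot 2^{-\lceil sn\rceil}$ rather than literally $2^{n(\phi(k/n)-s)}$; the ceiling changes the exponent by at most one and does not affect either limit.
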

	
This is a special case of Theorem 1 given in \cite{DMV16}, which establishes the existence of a phase transition for random CNF-XOR formulas.
 The region of satisfiability is sharply separated from the region of unsatisfiability by the function $\phi{(k/n)}$.  
Since we give an explicit function $\phi$ in Lemma \ref{lemma:lowerbound} and \ref{lemma:upperbound}, we are able to show sharp numerical bounds on the phase transition boundary. We plot the transition function, $\phi{(k/n)}$, with a red line in all figures in this paper.

Next, we use a result from \cite{DMV16}, which gives us the probability of a CNF being satisfiable when conjuncted with some number of XOR constraints, in terms of the count of solutions of that CNF. Using Theorem \ref{thm:satisfiability:} we relate the satisfiability threshold with the model count of any formula when conjuncted with \textit{random} XORs.

\begin{theorem}
	\label{thm:satisfiability:}
     (\cite{DMV16} , Lemma 7 and 12)
     
      Let $\alpha\geq 1$, $s \geq 0$, $n \geq 0$, and let F be a formula defined over $\{ X_1,\ldots,X_n \}$. Then 
	\begin{enumerate}
		
	 \item $\P{ F \wedge \Q  \text{ is satisfiable}   \bigm\vert \#F \geq 2^{\ceil{sn} +\alpha} }$ $ \geq 1- 2^{-\alpha}$.
	 \item $\P{F \wedge \Q \text{ is unsatisfiable} \bigm\vert \#F \leq 2^{\ceil{sn} -\alpha} } \geq 1- 2^{-\alpha}$.
	\end{enumerate}
\end{theorem}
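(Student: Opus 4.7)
The plan is to analyze $Y$, the number of joint solutions of $F \wedge \Q$, via the first-moment (Markov) inequality for part~(2) and the second-moment (Paley-Zygmund) inequality for part~(1). The key probabilistic input I would establish first is that for a single uniformly random XOR clause $a_1 x_1 \oplus \cdots \oplus a_n x_n = b_0$ with $(a, b_0)$ uniform on $\{0,1\}^{n+1}$, any fixed assignment is satisfied with probability $1/2$ and any two distinct assignments are jointly satisfied with probability $1/4$. Independence across the $\ceil{sn}$ clauses then gives, for the indicator $I_x$ that $x$ satisfies all of $\Q$, the identities $\E[I_x] = 2^{-\ceil{sn}}$ and, for $x \neq y$, $\E[I_x I_y] = 2^{-2\ceil{sn}}$.

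For part~(2), write $Y = \sum_{x\,:\,F(x)=1} I_x$. Linearity gives $\E[Y] = \#F \cdot 2^{-\ceil{sn}}$, which by hypothesis is at most $2^{-\alpha}$. Markov's inequality immediately yields $\P{F \wedge \Q \text{ satisfiable}} = \P{Y \geq 1} \leq \E[Y] \leq 2^{-\alpha}$. For part~(1), I would compute
\[
\E[Y^2] = \#F \cdot 2^{-\ceil{sn}} + \#F(\#F - 1) \cdot 2^{-2\ceil{sn}} \leq \E[Y] + \E[Y]^2,
\]
so Paley-Zygmund gives $\P{Y > 0} \geq \E[Y]^2 / \E[Y^2] \geq \E[Y] / (1 + \E[Y]) = 1 - 1/(1 + \E[Y])$. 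Under the hypothesis $\#F \geq 2^{\ceil{sn} + \alpha}$ we have $\E[Y] \geq 2^\alpha$, and therefore $\P{Y > 0} \geq 1 - 1/(1 + 2^\alpha) \geq 1 - 2^{-\alpha}$.

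The main---and essentially the only nontrivial---obstacle is justifying the pairwise-independence claim for distinct $x, y$. The joint event $\langle a, x\rangle = b_0$ and $\langle a, y\rangle = b_0$ is equivalent to $\langle a, x \oplus y\rangle = 0$ together with $b_0 = \langle a, x\rangle$. Since $x \oplus y$ is a nonzero vector in $\{0,1\}^n$, the first condition is a nontrivial linear constraint holding on exactly half of the $a$'s; given any such $a$, the second condition pins down $b_0$ with probability $1/2$, yielding joint probability $1/4$. Everything else is routine moment-method bookkeeping.
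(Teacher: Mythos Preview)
Your argument is correct. The paper itself does not give a proof of this theorem; it simply cites it as Lemmas~7 and~12 of \cite{DMV16}. Your first/second-moment computation is exactly the standard derivation underlying those lemmas: the random XOR family $\Q$ is a pairwise-independent (2-universal) hash family from $\{0,1\}^n$ to $\{0,1\}^{\ceil{sn}}$, which is precisely the property you verify, and the Markov and Paley--Zygmund bounds you apply are the usual way to turn that into the two tail estimates. So there is nothing to contrast---you have supplied the proof the paper outsources.
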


Since the bounds given in Theorem \ref{thm:existence} are not in closed form, we provide analytic bounds which are weaker. The separation in the lower and upper bound is exactly 1 for $ k> \floor{n/2}$ while for  $k \leq \floor{n/2}$ the separation is $\bigO(\log(n)/n)$

\begin{theorem}
	\label{thm:bounds}$\FQ$ is satisfiable w.h.p. if:
	\begin{enumerate}
		\item $s <  H(k/n) - \log(8k(1-k/n))/n$, and $0<k<n/2$
		\item $s < 1 - 1/n $, and $n/2\leq k\leq n$  
	\end{enumerate}
	$\FQ$ is unsatisfiable  w.h.p. if:
	\begin{enumerate}
		\setcounter{enumi}{2}
		\item If $s >  H(k/n) $, and $0<k<n/2$
		\item If $s > 1  $,  and $n/2\leq k\leq n$ 
	\end{enumerate}
\end{theorem}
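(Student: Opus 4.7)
The plan is to translate each hypothesis on $s$ into a quantitative comparison between $\#\F = \sum_{w=0}^{k} \binom{n}{w}$ and $2^{\ceil{sn}}$, and then invoke Theorem~\ref{thm:satisfiability:}. The four claims split naturally into two regimes according to whether $k$ lies below or above $n/2$.

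For the regime $0 < k < n/2$, I would use the standard entropy bounds on partial binomial sums,
$$\frac{2^{nH(k/n)}}{\sqrt{8k(1-k/n)}} \;\leq\; \binom{n}{k} \;\leq\; \#\F \;\leq\; 2^{nH(k/n)}.$$
For claim (3), the upper bound together with the hypothesis $s > H(k/n)$ gives $\log_2 \#\F \leq nH(k/n) < sn$, so some $\alpha > 0$ satisfies $\#\F \leq 2^{\ceil{sn} - \alpha}$ and Theorem~\ref{thm:satisfiability:}(2) yields unsatisfiability. For claim (1), the lower bound together with the hypothesis $sn < nH(k/n) - \log_2(8k(1-k/n))$ leaves $\log_2 \#\F$ strictly larger than $\ceil{sn}$ by a positive margin, and Theorem~\ref{thm:satisfiability:}(1) yields satisfiability.

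For the regime $n/2 \leq k \leq n$, the entropy bound is uninformative, so instead I would exploit the symmetry $\binom{n}{w} = \binom{n}{n-w}$, which combined with $k \geq \floor{n/2}$ gives the crude but sufficient bounds $2^{n-1} \leq \#\F \leq 2^n$. Claim (2) then follows because $s < 1 - 1/n$ implies $sn < n - 1 \leq \log_2 \#\F$, and claim (4) follows because $s > 1$ implies $\ceil{sn} > n \geq \log_2 \#\F$; in both cases Theorem~\ref{thm:satisfiability:} closes the argument.

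The principal obstacle is book-keeping the slack parameter: Theorem~\ref{thm:satisfiability:} delivers probability only $1 - 2^{-\alpha}$, so to upgrade to a \emph{w.h.p.} conclusion one must take $\alpha = \alpha(n) \to \infty$ and verify that the margin between $\log_2 \#\F$ and $\ceil{sn}$ actually diverges. This is immediate whenever $k = \Theta(n)$, since the entropy gap is then $\Theta(n)$, but near the boundaries $k \approx 0$ or $k \approx n/2$ one has to track the lower-order $\log(8k(1-k/n))$ term together with the rounding in $\ceil{sn}$ to confirm that the strict inequalities in the hypotheses generate an unbounded margin. Modulo this verification, the proof reduces to plugging the binomial bounds into Theorem~\ref{thm:satisfiability:} four times.
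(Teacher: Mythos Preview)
Your proposal is correct and follows essentially the same route as the paper: the proof there is delegated to Lemmas~\ref{lemma:lowerbound:approx} and~\ref{lemma:upperbound:approx}, which in turn plug the entropy bound on $\sum_{w\le k}\binom{n}{w}$ (and the trivial $2^{n-1}\le \#\F\le 2^n$ bound for $k\ge n/2$) into Theorem~\ref{thm:satisfiability:} with a slack $\alpha=\delta n+1\to\infty$, exactly as you outline. One small point: the paper's cited binomial lower bound has denominator $8k(1-k/n)$ rather than your $\sqrt{8k(1-k/n)}$, which is why the hypothesis in part~(a) carries $\log(8k(1-k/n))/n$ without a factor of $1/2$; your sharper bound still suffices, but you should align the constant with the statement being proved.
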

\begin{proof}
Part(a) and (b) follow from Lemma \ref{lemma:lowerbound:approx} and Part(c) and (d) follow from Lemma \ref{lemma:upperbound:approx} presented in Sections \ref{sec:analysis}.1 and \ref{sec:analysis}.2 respectively.  
\newline
\end{proof}
We will use the facts that $\#\F = \sum_{w=0}^{k} {{n} \choose {w}}$ and $\FQ = \F \wedge \Q$.

	
We use a commonly known bound for summation of binomial coefficients, 
\begin{lemma}(\cite{MS78}, Lemma 10.8). \newline
$  2^{nH(k/n)}/ (8k(1-k/n)) \leq  \sum_{w=0}^{k} {{n} \choose {w}} \leq 2^{nH(k/n)}$, \newline for $0< k\leq n/2$ and for all $n \geq 1$.
\end{lemma}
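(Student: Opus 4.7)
The plan is to split the statement into the upper and lower bounds and prove each separately. The upper bound follows from a short probabilistic change-of-measure argument, while the lower bound comes from retaining only the single largest term $\binom{n}{k}$ in the sum and applying a Stirling estimate with explicit constants.

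For the upper bound, I would set $p = k/n$, so that $p \leq 1/2$ by hypothesis, and start from the binomial identity
\[
1 \;=\; (p + (1-p))^n \;=\; \sum_{w=0}^{n}\binom{n}{w} p^w(1-p)^{n-w}.
\]
For each $w \leq k$, the ratio $p^w(1-p)^{n-w}/[p^k(1-p)^{n-k}]$ equals $[(1-p)/p]^{k-w}$, which is at least $1$, since $p \leq 1/2$ makes $(1-p)/p \geq 1$ and $k-w \geq 0$. Hence every term with $w \leq k$ dominates the term at $w=k$, and so
\[
1 \;\geq\; \sum_{w=0}^{k}\binom{n}{w} p^w(1-p)^{n-w} \;\geq\; p^k(1-p)^{n-k} \sum_{w=0}^{k}\binom{n}{w}.
\]
Rearranging gives $\sum_{w=0}^{k}\binom{n}{w} \leq p^{-k}(1-p)^{-(n-k)} = 2^{nH(k/n)}$, where the last equality is just the direct algebraic identity $-k\log_2(k/n) - (n-k)\log_2(1-k/n) = nH(k/n)$.

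For the lower bound I would begin with the trivial inequality $\sum_{w=0}^{k}\binom{n}{w} \geq \binom{n}{k}$ and show $\binom{n}{k} \geq 2^{nH(k/n)}/[8k(1-k/n)]$. Applying Robbins' refinement of Stirling's formula to $n!$, $k!$, and $(n-k)!$ yields the textbook estimate $\binom{n}{k} \geq 2^{nH(k/n)}/\sqrt{8k(1-k/n)}$. Since $1 \leq k \leq n/2$ implies $k(1-k/n) \geq 1/2$, we obtain $8k(1-k/n) \geq 4 \geq 1$, so $\sqrt{8k(1-k/n)} \leq 8k(1-k/n)$, and the weaker but cleaner bound stated in the lemma follows.

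The main obstacle is pinning down the explicit constant $8$ in the Stirling-based lower bound on $\binom{n}{k}$: the asymptotic form naturally produces a $\sqrt{2\pi n p(1-p)}$ denominator, and turning this into a clean finite-$n$ inequality requires carefully handling the Robbins correction factors $e^{1/(12n)}$ and $e^{-1/(12k+1)}$ and bundling them into the prefactor. Once that constant is fixed, both directions reduce to short routine manipulations.
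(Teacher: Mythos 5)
The paper does not prove this statement at all: it is quoted verbatim (in slightly weakened form) from MacWilliams and Sloane, so there is no in-paper argument to compare against. Your proposal supplies the standard proof, and it is essentially sound. The upper bound via $1=(p+(1-p))^n$ with $p=k/n\le 1/2$, the observation that $\bigl((1-p)/p\bigr)^{k-w}\ge 1$ for $w\le k$, and the identity $p^{-k}(1-p)^{-(n-k)}=2^{nH(k/n)}$ is complete and correct as written. For the lower bound you reduce to $\binom{n}{k}\ge 2^{nH(k/n)}/\sqrt{8k(1-k/n)}$ and then correctly pass from the square-root denominator to the plain denominator using $k(1-k/n)\ge 1/2$ (so $8k(1-k/n)\ge 4\ge 1$ and $\sqrt{x}\le x$); this step is exactly why the paper's transcription, which drops the square root present in the original Lemma 10.8, remains valid. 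The one place you lean on an unproved ingredient is the Stirling/Robbins estimate for $\binom{n}{k}$ with the explicit constant $8$; you flag this honestly, and it is itself a classical result (it is the companion Lemma 10.7 in the same reference), so citing it rather than re-deriving the constant is entirely reasonable here. In short: your argument is correct and, unlike the paper, actually constitutes a proof; the only thing you would need to make it self-contained is the finite-$n$ Stirling bound with its constant pinned down.
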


\subsection{Lower bound}
\begin{lemma}
	\label{lemma:lowerbound}	
	Let $k \geq 2$ and $ s\geq 0.$ If $s < \frac{1}{n} {log_{2} \#\F }$ as $\lim\limits{n \to \infty}$, then w.h.p. $ \FQ$ is satisfiable.
\end{lemma}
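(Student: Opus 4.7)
The plan is to invoke Theorem~\ref{thm:satisfiability:} part~1 with a slack parameter $\alpha = \alpha(n)$ that grows without bound as $n \to \infty$, so that the conditional probability lower bound $1 - 2^{-\alpha}$ itself tends to $1$. Since the cardinality formula $\F$ is deterministic---all of the randomness in $\FQ$ comes from $\Q$---the conditioning in Theorem~\ref{thm:satisfiability:} collapses to an unconditional statement as soon as the purely arithmetic hypothesis $\#\F \geq 2^{\lceil sn \rceil + \alpha}$ is verified.

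First I would formalize the hypothesis $s < \tfrac{1}{n}\log_2 \#\F$ as a positive asymptotic gap: there exist $\epsilon > 0$ and $N_0$ such that $\log_2 \#\F \geq sn + \epsilon n$ for all $n \geq N_0$. Setting $\alpha(n) := \lfloor \epsilon n \rfloor - 1$, this gives $\#\F \geq 2^{\lceil sn \rceil + \alpha(n)}$ for all sufficiently large $n$, exactly matching the precondition of Theorem~\ref{thm:satisfiability:} part~1.

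Applying that theorem then yields $\P{\FQ \text{ is satisfiable}} \geq 1 - 2^{-\alpha(n)}$, and since $\alpha(n)$ grows linearly in $n$, the right-hand side tends to $1$ as $n \to \infty$---which is precisely the w.h.p.\ satisfiability conclusion. The main obstacle is the correct reading of the limit-style hypothesis: a merely pointwise strict inequality for each $n$ would only guarantee $\alpha(n) > 0$, which is not enough to push $2^{-\alpha}$ to zero. It is only the asymptotic separation $\epsilon$ that forces $\alpha(n) \to \infty$, and once that slack is extracted the lemma follows as a one-step invocation of Theorem~\ref{thm:satisfiability:}, in fact with an exponentially small failure probability.
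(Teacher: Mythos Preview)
Your proposal is correct and follows essentially the same route as the paper: extract a fixed positive gap $\epsilon$ (the paper calls it $\delta$) from the strict inequality, define $\alpha$ to grow linearly in $n$ so that $\#\F \geq 2^{\lceil sn\rceil + \alpha}$ holds for all large $n$, and then apply Theorem~\ref{thm:satisfiability:}(a) to get $1 - 2^{-\alpha} \to 1$. Your explicit remark that the conditioning in Theorem~\ref{thm:satisfiability:} collapses because $\F$ is deterministic is a clarification the paper leaves implicit.
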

\begin{proof}	Since $2^{s}  < \#\F^{1/n}$ we can choose $\delta > 0$ and $ N > 0$ such that $ 2^{s + \delta + 1/N} < \#\F^{1/n}$. We can always find a small enough $\delta$ and a sufficiently large $N$ such that this is true. Since we are concerned with behavior asymptotic in $n$, we consider only $n>2N$. Then we have $2^{sn + \delta n + 2} < \#\F$ and so $2^{\ceil{sn} + \delta n + 1} < \#\F$. Let $\alpha = \delta n + 1$, so we get $2^{\ceil{sn}+\alpha} \leq  \#\F.$  Using Theorem \ref{thm:satisfiability:}a we see that $\P{\FQ\text{ is SAT} \bigm\vert \#\F\geq2^{\ceil{sn} + \alpha} } \geq 1- 2^{-\alpha}$. Since $\lim \limits_{n \to \infty} 1-2^{-\delta n - 1}$ converges to 1, $\FQ$ is satisfiable w.h.p.
\end{proof}

\begin{lemma}
	\label{lemma:lowerbound:approx}
	For $s \geq 0 $ and $ 0 < k < n$ and $\lim\limits{n \to \infty}$, $\FQ$ is satisfiable w.h.p. if:
	\begin{enumerate}
	\item $k\leq n/2 $ and $ s <  H(k/n) - \log(8k(1-k/n))/n $
	\item $k> n/2$ and $ s<1-1/n $
	\end{enumerate}
\end{lemma}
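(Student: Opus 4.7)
The plan is to reduce the lemma to Lemma \ref{lemma:lowerbound}, which already gives satisfiability w.h.p.\ whenever $s < \frac{1}{n}\log_2 \#\F$. So the task is simply to lower bound $\frac{1}{n}\log_2 \#\F$ in each of the two regimes $k \leq n/2$ and $k > n/2$, and then verify that the stated condition on $s$ is at most this lower bound.

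For part (a) (the case $k \leq n/2$), I would apply the cited bound from \cite{MS78} (Lemma 10.8) stated just before the subsection:
\[
\#\F \;=\; \sum_{w=0}^{k}\binom{n}{w} \;\geq\; \frac{2^{nH(k/n)}}{8k(1-k/n)}.
\]
Taking $\log_2$ and dividing by $n$ gives $\frac{1}{n}\log_2 \#\F \geq H(k/n) - \log_2(8k(1-k/n))/n$. Hence any $s$ strictly below $H(k/n) - \log(8k(1-k/n))/n$ satisfies the hypothesis of Lemma \ref{lemma:lowerbound}, and the conclusion follows.

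For part (b) (the case $k > n/2$), the binomial bound from \cite{MS78} is not directly applicable, but here I can use a symmetry argument: for any $k \geq n/2$,
\[
\#\F \;=\; \sum_{w=0}^{k}\binom{n}{w} \;\geq\; \sum_{w=0}^{\lfloor n/2 \rfloor}\binom{n}{w} \;\geq\; 2^{n-1},
\]
since the binomial coefficients are symmetric about $n/2$ and sum to $2^n$. Thus $\frac{1}{n}\log_2 \#\F \geq (n-1)/n = 1 - 1/n$, and again any $s < 1 - 1/n$ meets the hypothesis of Lemma \ref{lemma:lowerbound}, yielding the claim.

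Since both parts are one-line applications of known bounds combined with Lemma \ref{lemma:lowerbound}, there is essentially no substantive obstacle; the only minor care is ensuring the strict inequality in $s$ is preserved when replacing the exact $\frac{1}{n}\log_2 \#\F$ by its lower bound, which is immediate as we only shrink the admissible range of $s$.
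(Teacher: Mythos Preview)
Your proposal is correct and matches the paper's approach: both parts reduce to Lemma~\ref{lemma:lowerbound} via the same lower bounds on $\#\F$ (the \cite{MS78} bound for $k\le n/2$ and the trivial $\#\F > 2^{n-1}$ for $k>n/2$). The only cosmetic difference is that for part~(a) the paper re-expands the $\delta,N$ argument of Lemma~\ref{lemma:lowerbound} inline with the substituted bound rather than simply invoking the lemma as you do.
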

\begin{proof}
  For $n/2 < k \leq n$ , $\#\F > 2^{n-1}$. Observing that $s<1-1/n<\frac{1}{n} \log_{2}\#\F$ we see that Part(b) is an immediate consequence of Lemma \ref{lemma:lowerbound}.	
 
	Using the bound shown in Lemma , for $0< k<n/2$ we get that		
. Since $2^{s}  < 2^{H(k/n) - \log(8k(1-k/n))/n}$ we can choose $\delta > 0$ 	and $ N > 0$ such that $ 2^{s + \delta + 1/N} < 2^{H(k/n) - \log(8k(1-k/n))/n}$. We can always find a small enough $\delta$ and a sufficiently large $N$ such that this is true. Since we are concerned with behavior asymptotic in $n$, we consider only $n>2N$. Then we have $2^{sn + \delta n + 1} < 2^{nH(k/n) - \log(8k(1-k/n))}$ and so $2^{\ceil{sn} + \delta n + 1} < 2^{nH(k/n) - \log(8k(1-k/n))}$. Let $\alpha = \delta n + 1$, so we get $2^{\ceil{sn}+\alpha} < 2^{nH(k/n) - \log(8k(1-k/n))}.$  Using Theorem \ref{thm:satisfiability:}a we see that $\P{\FQ\text{ is SAT} \bigm\vert 2^{nH(k/n) - \log(8k(1-k/n))}\geq \right.$ $\left.2^{\ceil{sn} + \alpha} } \geq 1- 2^{-\alpha}$. Since $\lim \limits_{n \to \infty} 1-2^{-\delta n - 1}$ converges to 1, $\FQ$ is satisfiable w.h.p.
\end{proof}

\subsection{Upper bound}
\begin{lemma}
	\label{lemma:upperbound}	
	Let $k \geq 2, s\geq 0,$ and $r\geq 0$. If $s > \lim\limits_{n \to \infty} \frac{1}{n}{log_{2}\#\F}$, then w.h.p. $ \FQ$ is unsatisfiable.
\end{lemma}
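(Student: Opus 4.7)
The plan is to mirror the proof of Lemma \ref{lemma:lowerbound} almost step-for-step, but invoking part (b) of Theorem \ref{thm:satisfiability:} in place of part (a). The hypothesis $s > \lim_{n\to\infty} (1/n)\log_2 \#\F$ is exactly the reverse of the lower-bound hypothesis, so by symmetry one should be able to extract an exponentially growing gap $\#\F \leq 2^{\ceil{sn} - \alpha}$, where $\alpha$ grows linearly in $n$, and then feed this into part (b) to conclude unsatisfiability w.h.p.

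Concretely, I would first rewrite the hypothesis as $2^s > \lim_{n\to\infty} \#\F^{1/n}$ and use the strict inequality to pick $\delta > 0$ and $N > 0$ with $2^{s - \delta - 1/N} > \#\F^{1/n}$ for all sufficiently large $n$. Restricting to $n > 2N$ (so that $n/N > 2$) gives $\#\F < 2^{sn - \delta n - n/N} < 2^{sn - \delta n - 2}$. Since $\ceil{sn} \geq sn$, this yields $\#\F \leq 2^{\ceil{sn} - (\delta n + 1)}$. Setting $\alpha := \delta n + 1$, this is precisely $\#\F \leq 2^{\ceil{sn} - \alpha}$.

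Having produced the required $\alpha$, I would apply Theorem \ref{thm:satisfiability:}(b) to obtain $\P{\FQ \text{ is UNSAT} \bigm\vert \#\F \leq 2^{\ceil{sn} - \alpha}} \geq 1 - 2^{-\alpha}$. Because $\alpha = \delta n + 1 \to \infty$ as $n \to \infty$, the right-hand side converges to $1$, giving the desired w.h.p.\ unsatisfiability statement.

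No step looks genuinely hard: the argument is symmetric to the lower bound and the numerical work is routine. The only subtlety to watch is the direction of the ceiling slack. Since now we need $\ceil{sn} - \alpha \geq \log_2 \#\F$ rather than $\leq$, the ceiling works against us by up to $+1$, and the additive $1$ hidden in $\alpha = \delta n + 1$ together with the extra $-1/N$ built into the choice of $\delta$ is exactly what absorbs this slack. This is the one place where carelessness could break the bookkeeping, so I would state the chain of inequalities explicitly rather than invoke symmetry with the lower-bound proof.
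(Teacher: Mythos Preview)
Your proposal is correct and matches the paper's proof essentially line for line: pick $\delta,N$ so that $2^{s-\delta-1/N}>\#\F^{1/n}$, raise to the $n$th power, set $\alpha=\delta n+1$, and invoke Theorem~\ref{thm:satisfiability:}(b). One small remark: your worry that ``the ceiling works against us'' is inverted---since you need $\#\F\leq 2^{\ceil{sn}-\alpha}$ and $\ceil{sn}\geq sn$, the ceiling actually \emph{helps} here, so the extra slack you built in via $n>2N$ is unnecessary (the paper gets by with $n>N$), but of course this does no harm to your argument.
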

\begin{proof}	Since $2^{s}  > \#\F^{1/n}$ we can choose $\delta > 0$ 	and $ N > 0$ such that $ 2^{s - \delta - 1/N} > \#\F^{1/n}$. We can always find a small enough $\delta$ and a sufficiently large $N$ such that this is true. Since we are concerned with behavior asymptotic in $n$, we consider only $n>N$. Then we have $2^{sn - \delta n -1} > \#\F$ and so $2^{\ceil{sn} - \delta n - 1} > \#\F$. Let $\alpha = \delta n + 1$, so we get $2^{\ceil{sn}-\alpha} > \#\F.$  Using Theorem \ref{thm:satisfiability:}b we see $\P{\FQ\text{ is UNSAT} \bigm\vert \#\F\geq2^{\ceil{sn} - \alpha} } \geq 1- 2^{-\alpha}$. Since $\lim \limits_{n \to \infty} 1-2^{-\delta n - 1}$ converges to 1, $\FQ$ is satisfiable w.h.p.
\end{proof}
\begin{lemma}
	\label{lemma:upperbound:approx}
	For $s \geq 0 $ , $ 0 \leq k\leq n$ and $\lim\limits{n \to \infty}$, $\FQ$ is unsatisfiable w.h.p. if:
	\begin{enumerate}
	\item $k<n/2 $ and $ s \geq  H(k/n) $
	\item $k\geq n/2$ and $ s > 1 $
	\end{enumerate}
\end{lemma}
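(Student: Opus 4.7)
The plan is to mirror the structure of the proof of Lemma \ref{lemma:lowerbound:approx}, combining the non-approximated upper bound of Lemma \ref{lemma:upperbound} with closed-form estimates on $\#\F$ in terms of the binary entropy $H$. The two parts of the statement are handled by two different estimates on $\#\F$: a trivial one for $k \geq n/2$ and the binomial-sum bound for $k < n/2$.

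For part (b), when $k \geq n/2$, observe that $\#\F = \sum_{w=0}^{k} \binom{n}{w} \leq 2^{n}$, so $\frac{1}{n}\log_{2} \#\F \leq 1 < s$. Lemma \ref{lemma:upperbound} then applies directly and gives that $\FQ$ is unsatisfiable w.h.p.

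For part (a), when $0 < k < n/2$, I would invoke the upper half of the binomial-sum bound cited from \cite{MS78}, namely $\#\F \leq 2^{nH(k/n)}$. This yields $\frac{1}{n}\log_{2} \#\F \leq H(k/n) \leq s$. Following the template of the proof of Lemma \ref{lemma:upperbound}, I would pick $\delta > 0$ and $N > 0$ with $2^{s - \delta - 1/N} > 2^{H(k/n)} \geq \#\F^{1/n}$; for $n > N$ this gives $2^{\ceil{sn} - \alpha} > \#\F$ with $\alpha = \delta n + 1$, and Theorem \ref{thm:satisfiability:}b then yields unsatisfiability with probability at least $1 - 2^{-\delta n - 1}$, which converges to $1$.

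The main technical wrinkle is the gap between the stated hypothesis $s \geq H(k/n)$ and the strict inequality needed to extract a positive $\delta$. I expect this to be the primary obstacle, since if $s = H(k/n)$ exactly, one could only extract $\alpha \leq 1$, yielding a constant-order failure probability rather than a w.h.p. guarantee. The natural fix, matching the asymptotic spirit of the companion Lemma \ref{lemma:lowerbound:approx}, is to read the hypothesis as $s > H(k/n)$ uniformly in $n$, so that $s - H(k/n)$ is bounded below by some fixed $\delta > 0$ for all sufficiently large $n$; this is precisely the slack required to drive the probability bound from Theorem \ref{thm:satisfiability:}b to $1$.
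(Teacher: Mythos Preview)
Your proposal is correct and follows essentially the same route as the paper: part (b) is reduced to Lemma~\ref{lemma:upperbound} via the trivial bound $\#\F \leq 2^n$, and part (a) is handled by combining the entropy upper bound $\#\F \leq 2^{nH(k/n)}$ with the $\delta$--$N$ argument of Lemma~\ref{lemma:upperbound} and Theorem~\ref{thm:satisfiability:}b. The wrinkle you flag about $s \geq H(k/n)$ versus $s > H(k/n)$ is real, and the paper's own proof simply begins from the strict inequality ``Since $2^{s} > 2^{H(k/n)}$'' without addressing the equality case, so your proposed reading is exactly what the paper does.
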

\begin{proof}
 For $n/2 \leq k < n$ observing that $s>1>\frac{1}{n} \log_{2}\#\F$ we see that Part(b) is an immediate consequence of Lemma \ref{lemma:upperbound}.

	 Since $2^{s}  > 2^{H(k/n)}$ we can choose $\delta > 0$ 	and $ N > 0$ such that $ 2^{s - \delta - 1/N} > 2^{H(k/n)}$. We can always find a small enough $\delta$ and a sufficiently large $N$ such that this is true. Since we are concerned with behavior asymptotic in $n$, we consider only $n>N$. Then we have $2^{sn - \delta n -1} > 2^{nH(k/n)}$ and so $2^{\ceil{sn} - \delta n - 1} > 2^{nH(k/n)}$. Let $\alpha = \delta n + 1$, so we get $2^{\ceil{sn}-\alpha} > 2^{nH(k/n)}.$  Using Theorem \ref{thm:satisfiability:}b we see that $\P{\FQ\text{ is UNSAT} \bigm\vert 2^{nH(k/n)}\geq2^{\ceil{sn} - \alpha} } \geq 1- 2^{-\alpha}$. Since $\lim \limits_{n \to \infty} 1-2^{-\delta n - 1}$ converges to 1, $\FQ$ is satisfiable w.h.p.
\end{proof}

\begin{figure}
	\begin{center}
		\includegraphics[width= 1\columnwidth]{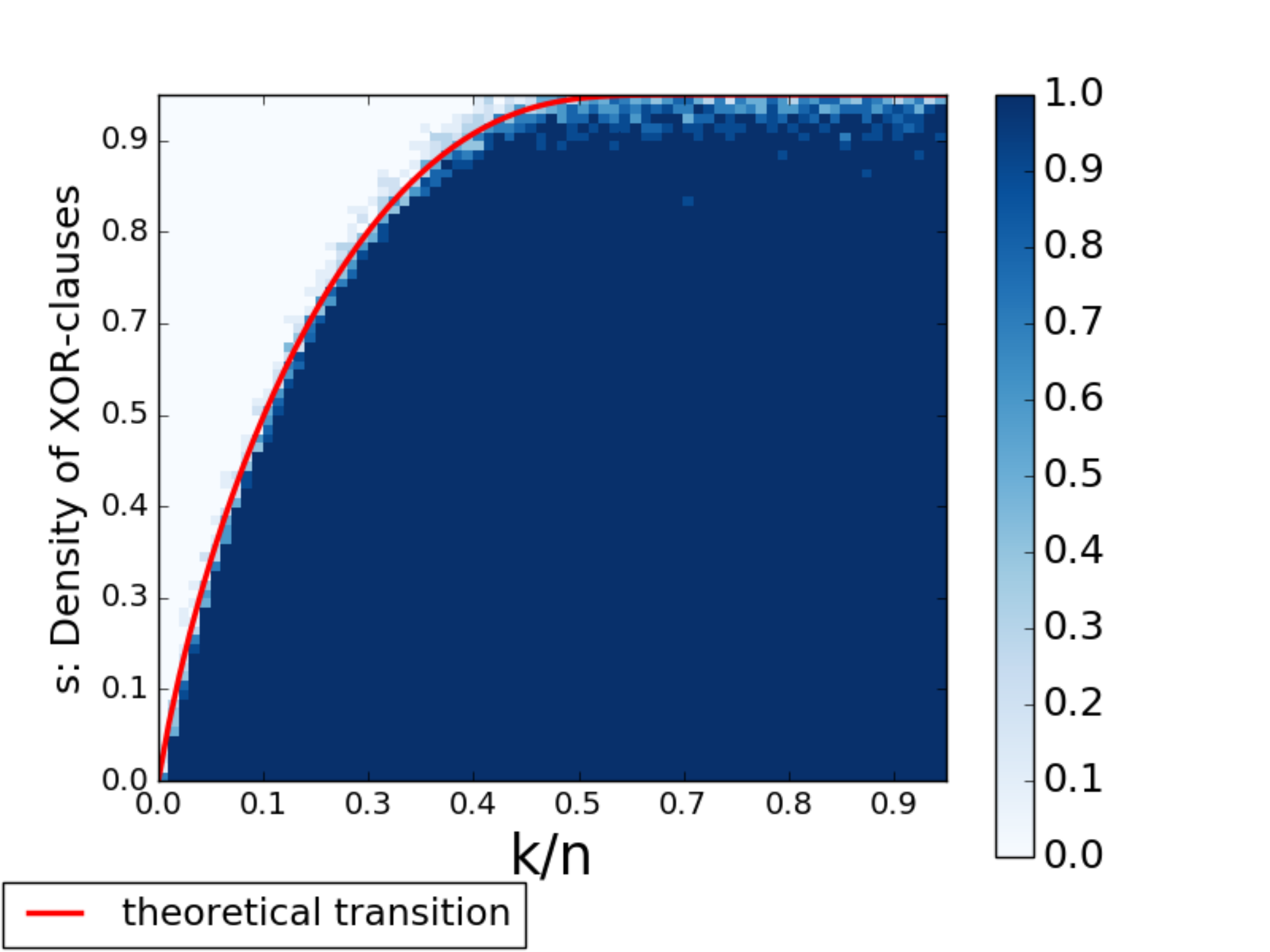}
	\end{center}
	\caption{This plot shows the satisfiability behavior for $n=75$. The darker shade of blue indicates the instances which were satisfiable with higher probability and the red line shows the theoretically derived phase transition. (Best viewed in color)} 
	\label{fig:satisfiablitiy}
\end{figure}
\begin{figure*}
	\begin{multicols}{3}
		\includegraphics[width=1\linewidth]{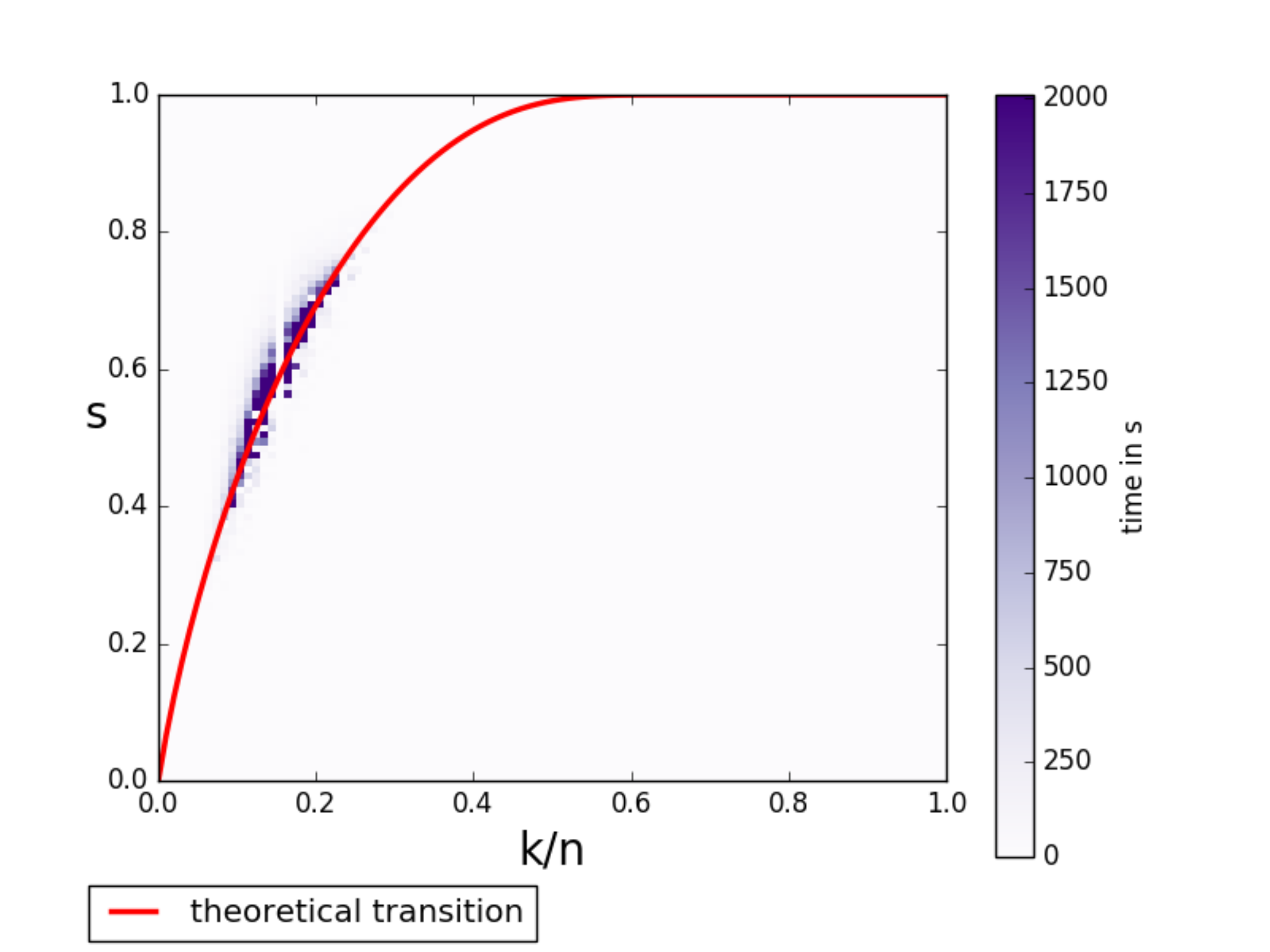}\par
		\captionof{figure}*{$n=100$}
		\includegraphics[width=1\linewidth]{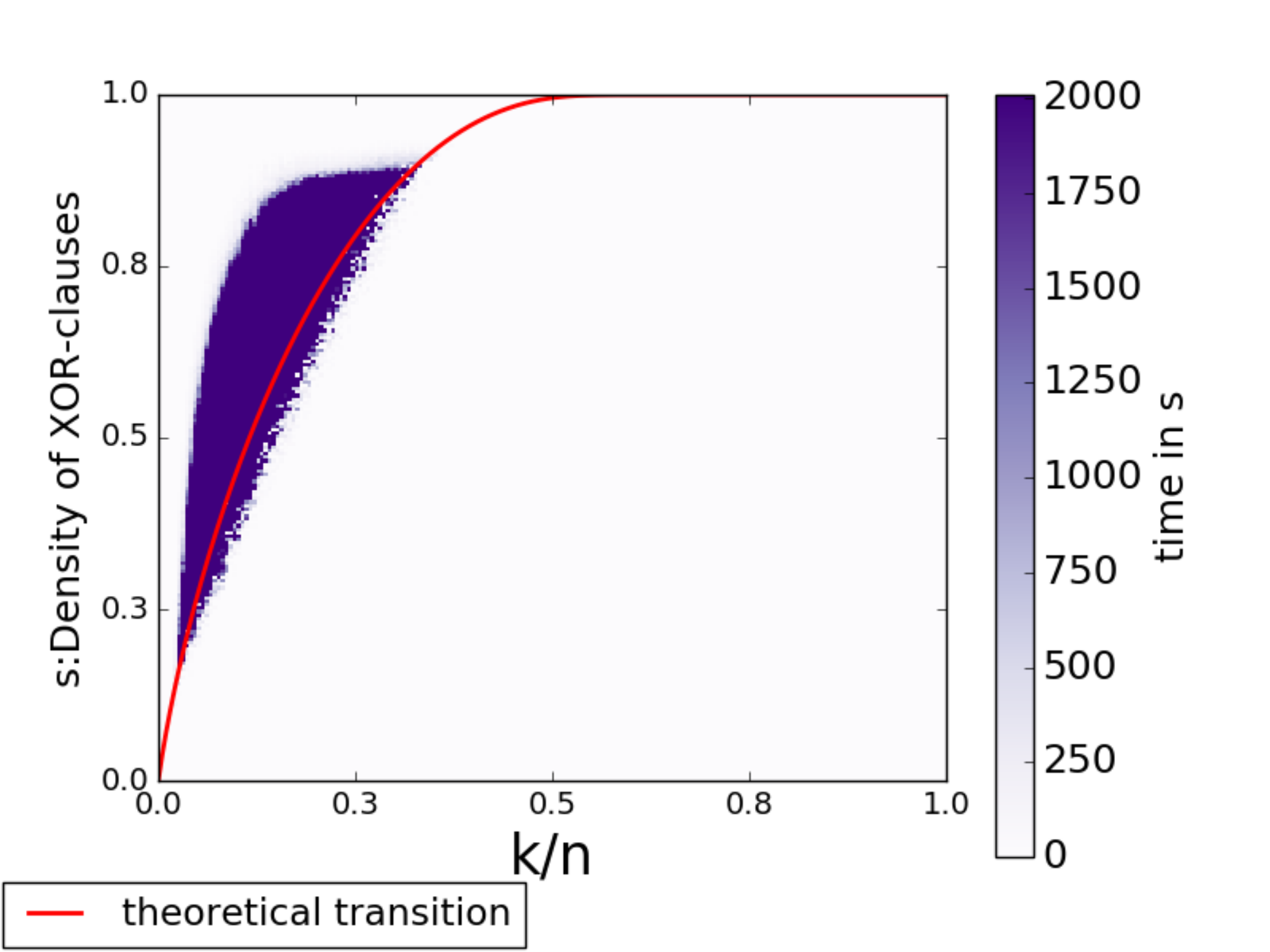}\par 
		\captionof{figure}*{$n=200$}
		\includegraphics[width=1\linewidth]{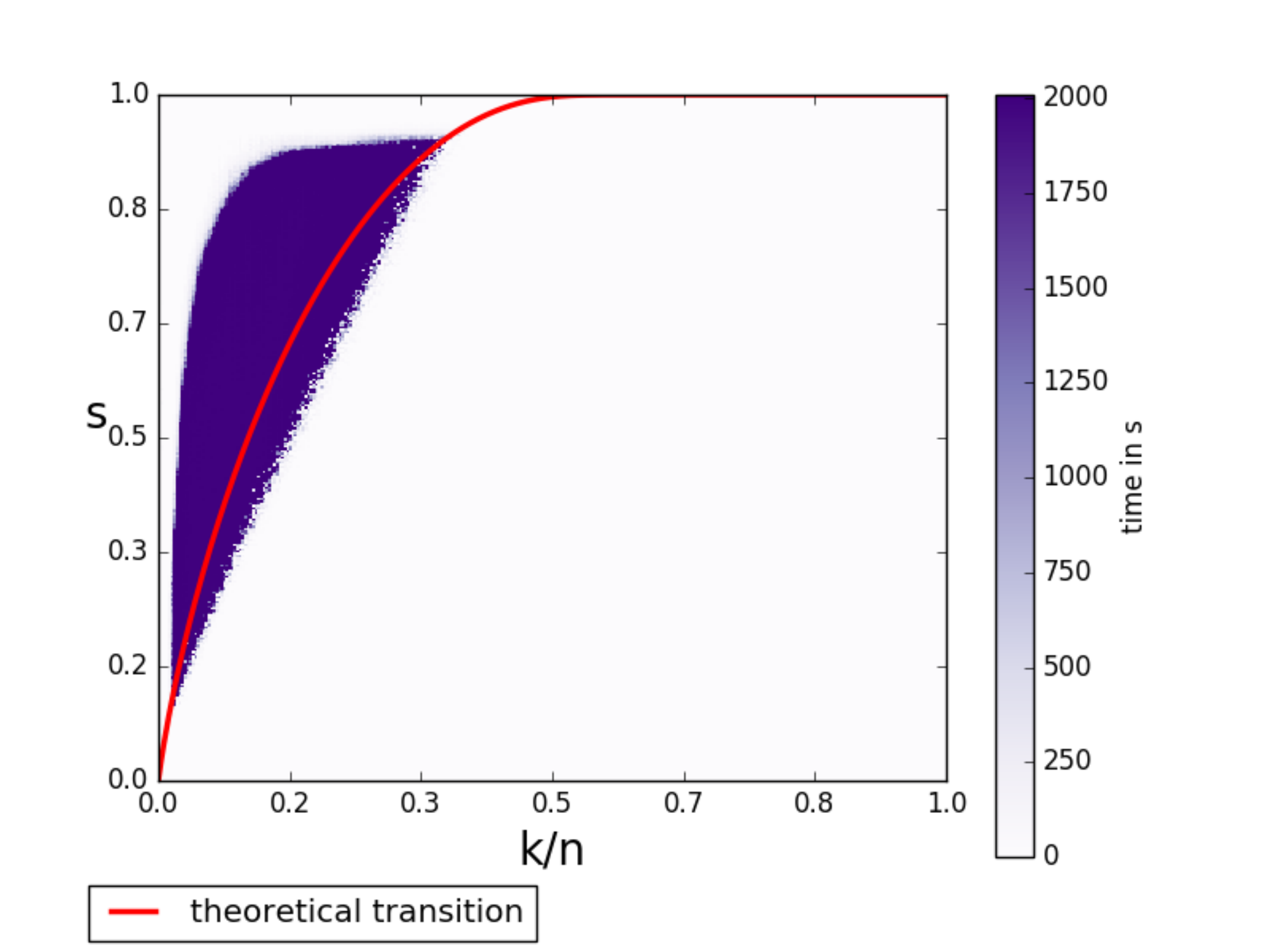}\par
		\captionof{figure}*{$n=300$}
	\end{multicols}
	\caption{Three plots for $\cardnet$ encoding. Each plot shows the runtime behavior for a different number of variables in the following order: (a) $n=100$ , (b) $n=200$, and (c) $n=300$. The purple region is where the blowup in runtime was observed. The red line indicates the phase transition. (Best viewed in color) \label{fig:ncomp}}
\end{figure*}
\section{Experimental Results} \label{sec:experiments}
\subsubsection{Experimental Setup}
\blfootnote{The data and scripts associated with this project are available at \url{https://github.com/meelgroup/1-CARD-XOR} } \label{sec:experiments:scaling:setup}
For every value of $n$, $k$ and $s$ we generate $\FQ$ in the following manner. To uniformly choose an XOR-clause, we include each variable of $\{x_1,\cdots,x_n\}$ with probability $\frac{1}{2}$ in the XOR clause. We then choose exactly one of $\{0,1\}$ with probability $\frac{1}{2}$ in the XOR clause. We repeat such uniform sampling of XOR clauses $\ceil{sn}$ many times and conjunct all such XOR-clauses to build $\Q$. Finally, we conjuct $\Q$ with $\F$ to generate $\FQ$. 

We performed experiments with $9$ values of $n$. For each $n\in \{ 50,75,100,125,150,175,200,250,300\}$ we generated an instance of 1-CARD-XOR for all values of $k \in [0,n]$ and $\ceil{sn} \in [1,n]$. We repeated the experiments 10 times for each data point with $n=\{75,100\}$ to get a finer estimate on the satisfiability at the transition threshold. We were not able to run experiments for values of $n$ significantly larger than those listed above due to computational constraints.

Since \CryptoMiniSAT~\cite{SNC09} is a specialized solver which handles XOR clauses in combination with CNF clauses quite efficiently, we use it as the SAT solver in our experiments. We used a high performance computing cluster of 25 nodes for our experiments. Each node consisted of an Intel\textsuperscript{\textregistered} Xeon\textsuperscript{\textregistered} E5-2690 v3 CPU with 24 cores and 96GB of RAM divided evenly among the cores, with each core having access to 4GB of RAM. Each experiment was conducted on a single core. Our experimental evaluation used over 80,000 CPU hours. 

It is known that encoding cardinality constraint using different encodings may result not only in different sizes for $\F$ but also may have an impact on the performance of the solver.
To investigate the impact of various encodings of cardinality constraints for 1-CARD-XOR formulas, for each value of $n$, $s$ and $k$ we experimented with three cardinality encodings.
  \begin{itemize}[itemsep=1pt]
	\item Adder: $\bigO(n)$ clauses, no arc consistency ~\cite{ES06}
	\item BDD:   $\bigO(n\cdot k)$ clauses, preserves arc-consistency, is equivalent to $LT^{n,k}_{SEQ}$ encoding ~\cite{Sinz05} 
	\item Cardinality Network:  $\bigO(n\cdot log^2 k)$ clauses, preserves arc consistency ~\cite{CardinalityNA09} 
\end{itemize} 
In our experiments, we have used the PBLib ~\cite{pblib} tool to encode our constraints.
A timeout of $2000$ seconds was used for all experiments. 
\subsubsection{Polarity Caching}
The SAT solver goes through an iterative process of search and inference. During search, it selects an unassigned variable and then decides on the truth value (polarity) to be given to this variable.
During inference phase it explores the implications of these choices until we either get a contradiction or a satisfying assignment or nothing further can be inferred and the solver has to  make another variable selection. Polarity selection heuristics decide which truth value, either {\tt true} or {\tt false} to assign to the selected variable. When the solver backtracks due to a contradiction, it must explore  the other choice for the truth value. A heuristic which works very well in practice is polarity caching \cite{DBLP:conf/sat/PipatsrisawatD07}, which involves remembering the previous successful choice made on a particular variable. Another simpler heuristic is setting the polarity to {\tt false}, which instructs the solver to always explore the {\tt false} branch before the {\tt true}. As discussed later, setting the polarity to {\tt false} always significantly improves the runtime performance of the solver. Therefore, the experiments concerning runtime performance of SAT solver with respect to other parameters were performed with setting polarity flag to {\tt false} in \CryptoMiniSAT.


\paragraph{Results}
The objective of our experimental evaluation is to answer these four research questions:
\begin{enumerate}[itemsep=1pt,leftmargin=9.5mm]
	\item[\bf{RQ1.}] How does the satisfiability of $\FQ$ vary, as parameters $k$  and XOR clause density $s$ vary ?
	\item[\bf{RQ2.}] How does the runtime performance of SAT solver for $\FQ$ vary with respect to $n$, $k$, $s$?
	\item[\bf{RQ3.}] How do the different encodings affect the runtime performance of the SAT solver for $\FQ$?
	\item[\bf{RQ4.}] How do the different branching heuristics affect the runtime performance of a SAT solver on $\FQ$?
\end{enumerate}
We now first present detailed analysis below and then summarize our main conclusions. 

\paragraph{\bf{RQ1.}}Figure~\ref{fig:satisfiablitiy} shows the satisfiability of random instances of $\FQ$ as the density of XOR-clauses $s$  and the upper bound on at-most-k constraint $k$ varies. The y-axis indicates the density $s$ and the x-axis indicates $k/n$. Each point on the plot is color-coded to represent satisfiability of the corresponding $\FQ$. The dark blue color indicates that the corresponding formula is satisfiable while light color indicates unsatisfiability. The red curve represents the theoretical phase transition curve,  obtained in Section \ref{sec:analysis}, i.e., a point in the region under the curve is likely to be satisfiable while a point in the region over the curve is likely to be unsatisfiable. We observe that the empirically observed behavior agrees with the analysis, thus demonstrating the tightness of our analysis. 

\begin{figure*}
	\begin{multicols}{3}
		\includegraphics[width=1\linewidth]{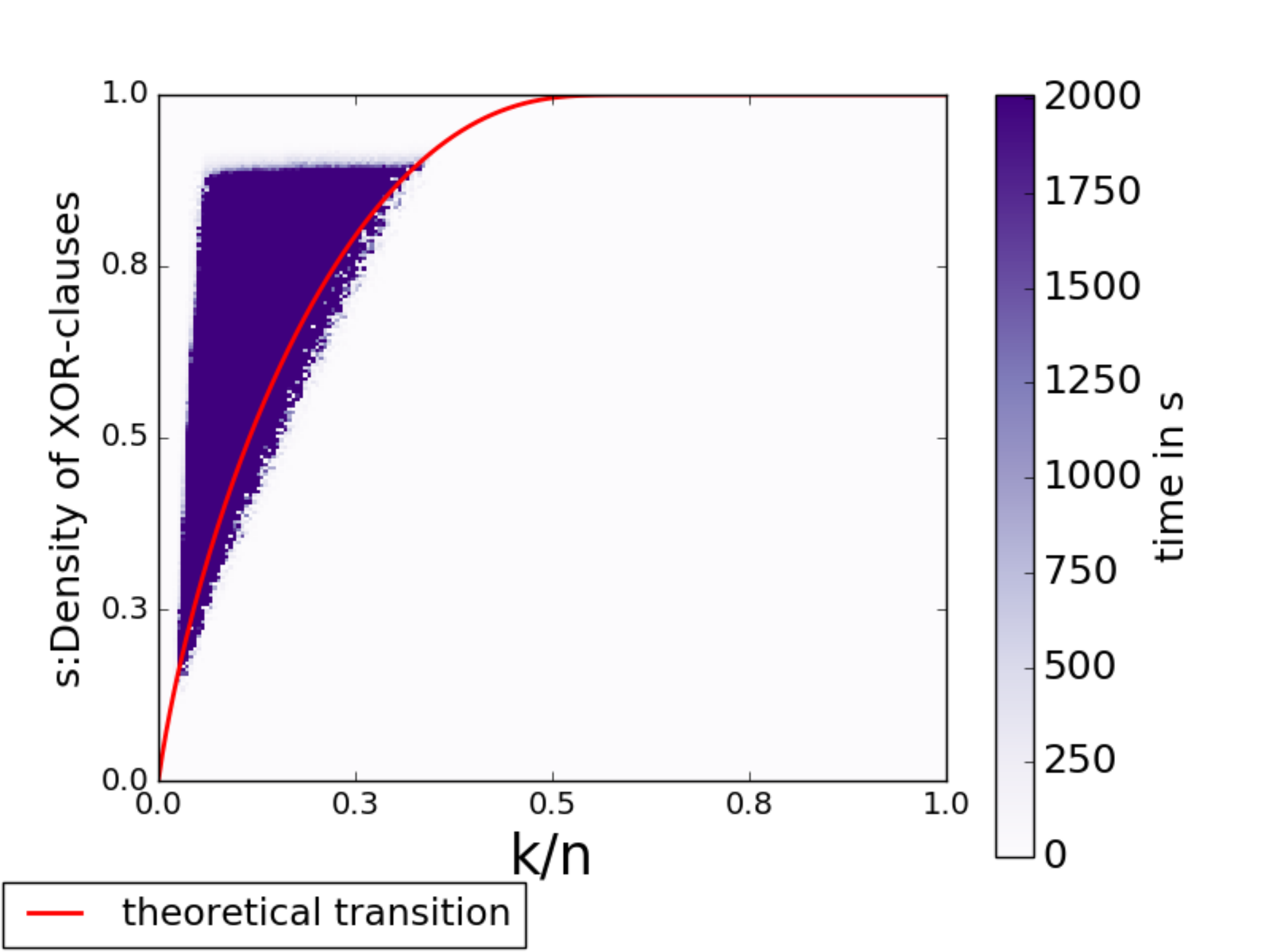}\par
		\captionof{figure}*{\adder}
		\includegraphics[width=1\linewidth]{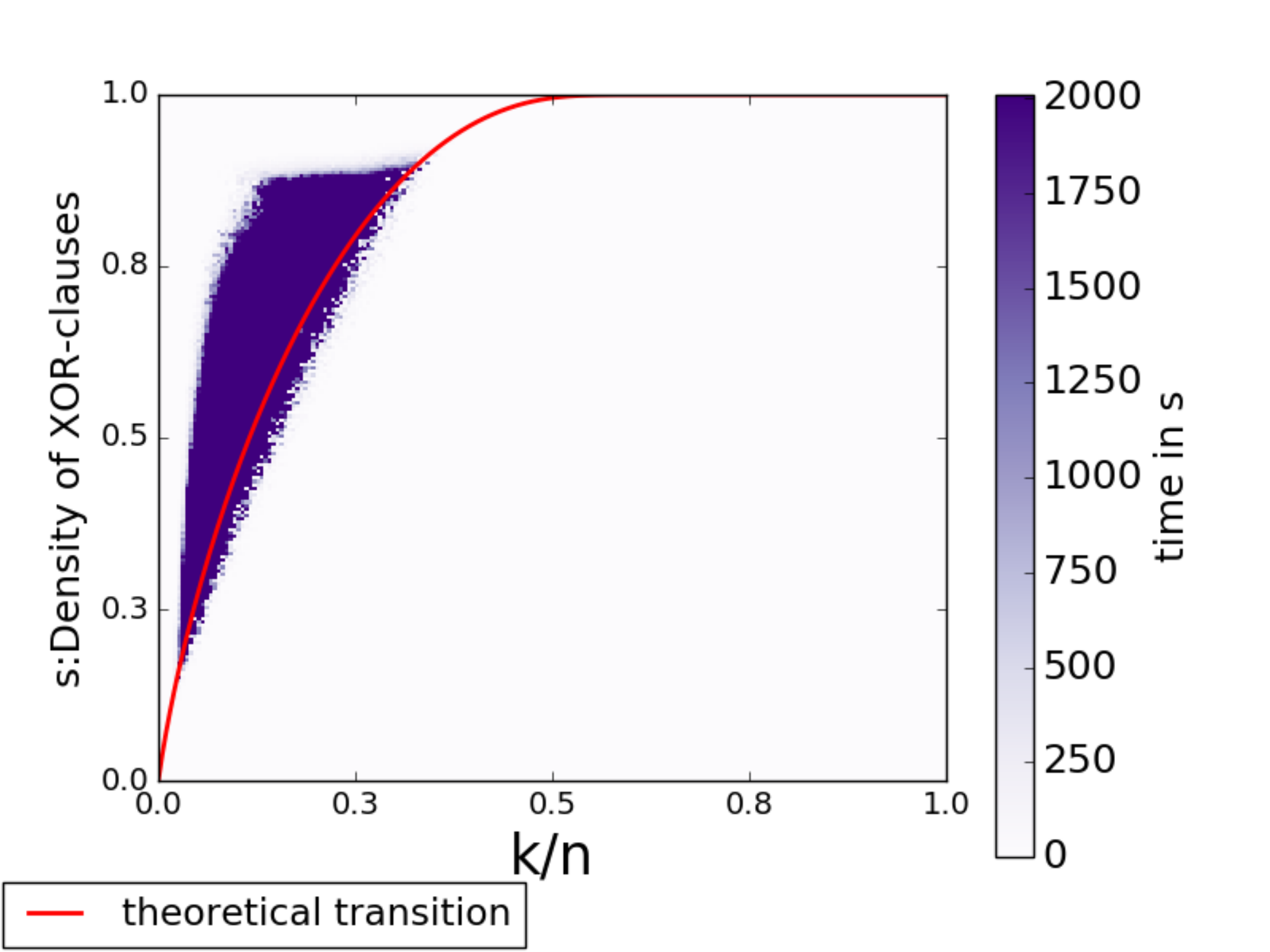}\par 
		\captionof{figure}*{\bdd}
		\includegraphics[width=1\linewidth]{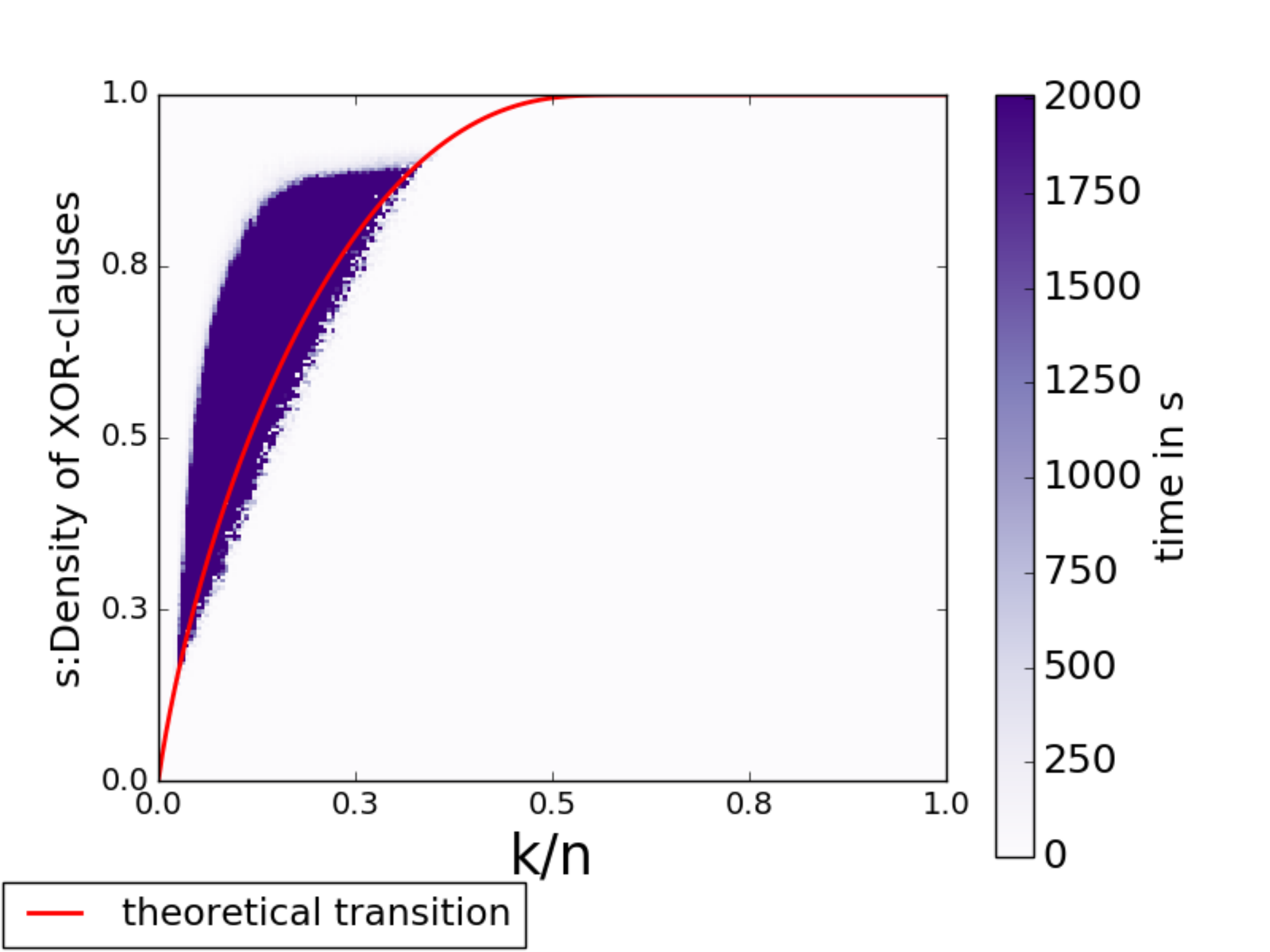}\par
		\captionof{figure}*{\cardnet}
	\end{multicols}
	\caption{Three plots for $n=200$. Each plot shows the runtime behavior for a different encoding in the following order: (a) \adder, (b) \bdd, and (c) \cardnet. The purple region is where the blowup in runtime was observed. The red line indicates the phase transition.\label{fig:enccomp}}
\end{figure*}

\begin{figure}
	\begin{multicols}{2}
		\includegraphics[width=\linewidth,height=\linewidth]{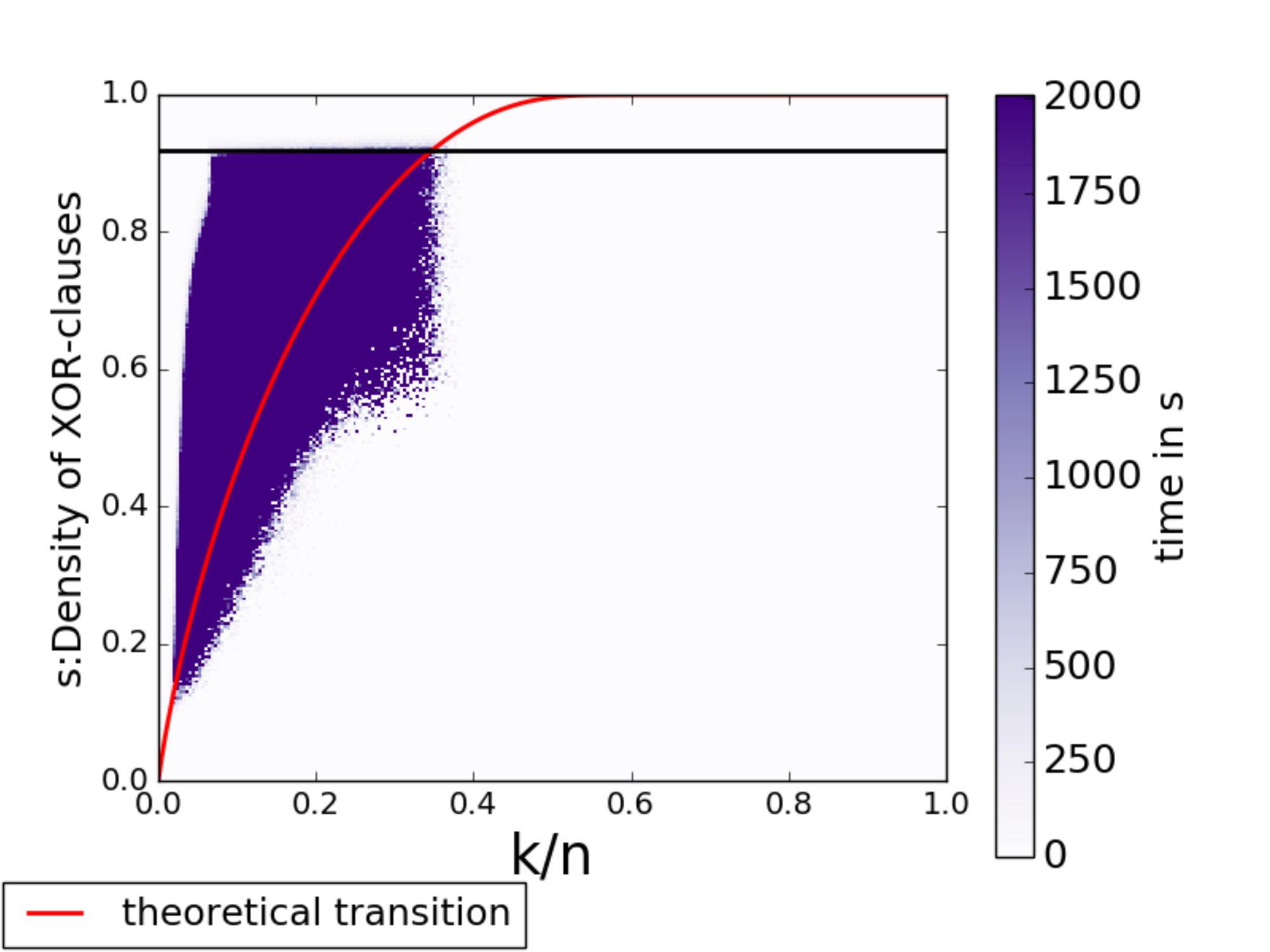}\par
		\captionof{figure}*{(a)}
		\includegraphics[width=\linewidth,height=\linewidth]{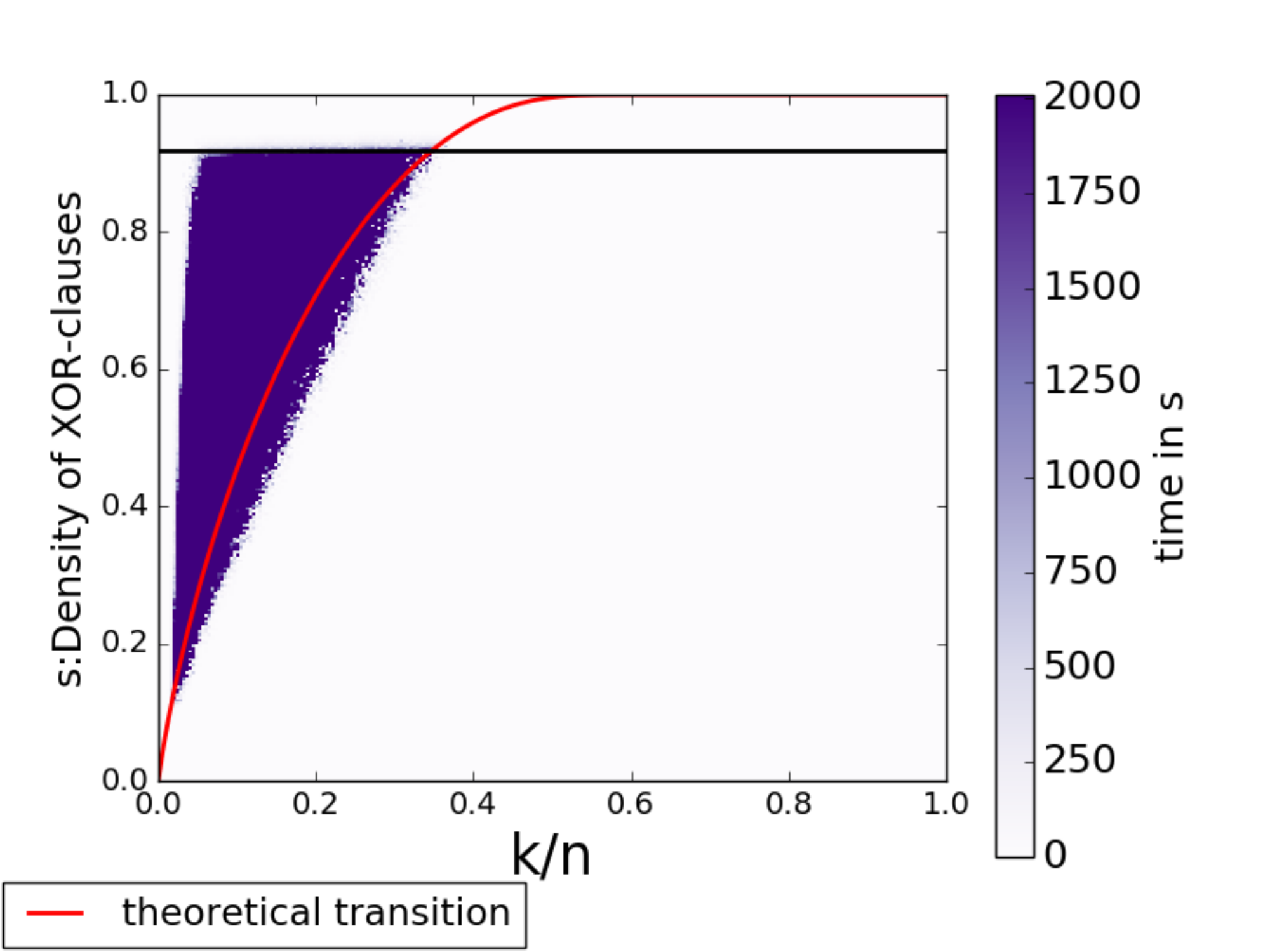}\par
		\captionof{figure}*{(b)}
	\end{multicols}
	\captionof{figure}{$n=250$, \adder. The purple region shows the hard instances for the solver when (a) polarity caching was used and (b)when the solver was made to explore the \false branch first.}	\label{fig:caching} 
\end{figure}

\paragraph{\bf{RQ2.}}We now turn our attention to a study of scaling behavior of runtime performance of SAT solver. Figure~\ref{fig:ncomp} shows the runtime performance for increasing values of $n$, i.e., $n \in \{100,200,300\}$. Similar to Figure~\ref{fig:satisfiablitiy}, the x-axis indicates  $k/n$ while the y-axis indicates the density $s$. Note that most of the instances were solvable in just 2 seconds but the instances within the purple region timed out for timeout of 2000 seconds. Furthermore, we observe that the area of {\em hard} instances increases with $n$. The sudden increase in runtime around the phase transition is reminiscent of similar behavior for random CNF instances but is unlike the behavior observed in case of CNF-XOR formulas~\cite{DMV17}. This behavior necessitates further research for a deeper understanding, and we hope this deeper understanding would be useful in the design of efficient CARD-XOR solvers.  
\paragraph{\bf{RQ3.}}Given the widespread interest in the design of different encodings for CARD constraints, it is natural to ask whether the runtime behavior of $\FQ$ is sensitive to encodings. To this end, Figure~\ref{fig:enccomp} shows the runtime performance of SAT solvers for the above mentioned three different encodings: \adder, \bdd, and \cardnet. Similar to Figure~\ref{fig:satisfiablitiy}, the x-axis indicates  $k/n$ while the y-axis indicates the density $s$. We observe that while the area of the purple region deviates slightly across for different encodings, the qualitative behavior around phase transition regions very similar.

 \paragraph{\bf{RQ4.}} We now turn to the question, what effect do the branching heuristics have on the runtime behavior of SAT solver for $\FQ$. Figure~\ref{fig:caching}(a) shows the behavior when polarity caching was used while Figure~\ref{fig:caching}(b) shows the behavior when the solver always set the {\tt polarity} flag to {\tt false}. The value of $n$ is set to 200 for both the cases. Interestingly, we observe a significant reduction in the area of the purple region by always setting {\tt polarity} flag to {\tt false}, which is surprising given polarity caching has shown to achieve significant gains for SAT solving. A detailed study of different heuristics is beyond the scope of this work and is left for future work.

\section{Conclusion}\label{sec:conclusion}
In this paper, we study 1-CARD-XOR formulas, which are expressed as a conjunction of cardinality constraints and XOR constraints. The CARD-XOR formulas are ubiquitous in several domains of interest such as their close relationship to maximum likelihood decoding and their importance in hashing-based techniques for discrete integration. Our study revealed the empirical existence of phase transition region of satisfiability of random 1-CARD-XOR formulas for which we were able to establish tight theoretical bounds. 

The investigation into runtime behavior led to the surprising discovery of behavior reminiscent of random CNF formulas but significantly different from recent studies on CNF-XOR formulas. Furthermore, we observed that despite significant interest in CP/SAT communities devoted to design of encodings, the qualitative nature of runtime behavior remains consistent across different encodings. Finally, we discover a significant impact of branching heuristics on the runtime behavior. Similar to other CSP problems where the study of phase transition have led to development of algorithmic insights, our study opens future directions into the development of algorithmic techniques for efficient CARD-XOR solvers in practice. 
\section*{Acknowledgements}
This research is supported by the National Research Foundation Singapore under its AI Singapore Programme [R-252-
000-A16-490], the NUS ODPRT Grant [R-252-000-
685-133] and SERB, DST, India through [ECR 2017/001126]. The computational work for this article was performed
on resources of the National Supercomputing Centre, Singapore. \url{https://www.nscc.sg/}.

{\small
	\bibliography{sigproc}
	\bibliographystyle{named}
}
\end{document}